\title{Manipulating Tournaments in Cup and Round Robin 
Competitions\thanks{The
second author is funded by the Department of Broadband, Communications
and the Digital Economy and the Australian Research Council. }
}
\author{Tyrel Russell\inst{1} \and Toby Walsh\inst{2}}
\institute{Cheriton School of Computer Science, University of Waterloo, Waterloo, Canada\\ \email{tcrussel@cs.uwaterloo.ca} \and NICTA and UNSW, Sydney, Australia\\ \email{toby.walsh@nicta.com.au}}
\date{}
\begin{document}
\maketitle
\begin{abstract}
In sports competitions, teams can manipulate the result by, 
for instance, throwing games. We show that we can
decide how to manipulate round robin and cup competitions, two 
of the most popular types of sporting competitions 
in polynomial time. In addition, we show that finding the minimal 
number of games that need to be thrown to manipulate the
result can also be determined in polynomial time.    
Finally, we show that 
there are several different variations of standard cup competitions 
where manipulation remains polynomial.
\end{abstract}

\section{Introduction}

The Gibbard-Satterthwaite theorem proves that, under some
modest assumptions, voting systems are always manipulable.
One possible escape proposed by Bartholdi, Tovey and Trick 
is that the manipulation may be computationally
too difficult to find \cite{bartholdi89} (but see \cite{walsh09}
for discussion about whether manipulation is hard 
not just in the worst case).
Like elections, sporting competitions can also be manipulated.
For example a coalition of teams might 
throw games strategically to ensure that a desired team wins or 
a certain team loses. We consider here
the computational complexity of computing such
manipulations. We show that, for several common types of competitions, 
determining when a coalition can manipulate the result
is polynomial.  Our results adapt manipulation procedures
for elections where voters can misrepresent their 
preferences. We consider two of the 
most common methods used for deciding sporting 
competitions, cups and round robins. These
correspond to elections run using sequential majority voting 
(also known as the cup rule) and Copeland scoring, respectively.

Manipulating a sporting competition is slightly
different to manipulating an election as, in a sporting
competition, the voters are also the candidates. 
A tournament graph describes the outcome of all fair games 
between opponents.  Manipulating a competition therefore
modifies not votes but the tournament graph directly. 
Since it is hard without bribery or similar mechanisms
for a team to play better than it can, we consider
manipulations where teams in the coalition are only able
to throw games. By comparison, in an election, voters in
the manipulating coalition can mis-report their
preferences in any way they choose.  Tang, Shoham and Lin \cite{tang09} addressed this type of tournament manipulation in team competitions by providing conditions for truthful reporting of player strengths.  Their method tries to encourage teams to rank their players honestly so that, when the teams compete in bouts, the best player on one team plays the best on the other, the second best plays the opposing second and so forth.  An example of this type of competition is Davis Cup Tennis.

Conitzer, Sandholm and Lang \cite{conitzer07} give an algorithm to
determine if a coalition can manipulate 
the cup rule.  We modify this algorithm to manipulate
directly the tournament graph instead of the votes. 
Bartholdi, Tovey and Trick 
\cite{bartholdi89} discuss direct manipulations of the tournament under second order Copeland, a round robin like rule with secondary tie breaking.  Using the work of Kern and Paulusma \cite{kern04}, we show that the manipulation of round robin competitions is directly tied to the problem of winner determination in sports problems.  
Altman, Procaccia and Tenneholtz \cite{altman09} construct a social choice rule that is monotonic, pairwise non-manipulable and non-imposing.  Round robin and cup competitions are monotonic as a single team losing a game does no better.  Pairwise non-manipulability means that no two teams are better off by manipulating the tournament.  Our results show that round robin and cup competitions are pairwise manipulable and that manipulations can be calculated in polynomial time.

We modify our algorithms to calculate the smallest number of manipulations needed.  For cup competitions, we add dynamic programming to Conitzer, Sandholm and Lang's algorithm. For round robin competitions, we modify the flow network used to solve winner determination to include weights on manipulations and calculate a minimum cost feasible flow.  Vu, Altman and Shoham \cite{vu09} used a similar method to calculate the probability that a team wins the competition.  
 Vu et al.~\cite{vu09} provide several results on determining probabilities of teams winning given a seeding of the tournament.  Hazon et al.~\cite{hazon08} showed that it is NP-Complete to determine if a team wins a cup with a given probability.  This is similar to determining a possible winner given random reseeding except edges in the tournament are labelled with probabilities.  We look at the complexity of manipulation under reseeding in the deterministic case.  Finally, we look at the complexity of double elimination cups. 


\section{Background}


In many sporting competitions, the final winner of a competition is decided by a tree-like structure, called a \emph{cup}.  The most common type is a \emph{single elimination cup}, a tree structure where the root and internal nodes represent games and leaves represent the teams in the tournament.  
A cup can include a \emph{bye game}, a game where a team skips a game to re-balance the schedule.  Usually, the top teams are given a bye game while the lower teams do not so that the number of teams in the next round is strict power of two.
Cups need to be \emph{seeded} to determine which teams play against each other in each round.  One method for seeding is by rank.  The most common method for ranked seeding or reseeding is to have the top team play the worst team, the second place team play the second worst team and so forth.  An example of ranked seeding using this method is the National Basketball Association in the US.  Another method for determining seeding is randomly, also known as a draw.  An example of this is the UEFA Champions League where teams reaching the quarter finals are randomly paired for the remainder of the tournament.  Seeding may also be more complex (for instance, it may
be based on the group from which teams qualify or some other criteria).  
Another way that cups are modified is between fixed and unfixed cups.  A \emph{fixed cup} is a cup where there is a single seeding at the start of the cup.  Examples of this are the National Basketball Association and the World Cup of Football.  An \emph{unfixed cup} is one where seeding may occur not only before the start but between any round.  Examples with an unfixed cup are the National Hockey League and the UEFA Champions League.

Cups are not necessarily single elimination.  A double elimination cup is designed so that a team can lose two games instead of one game.  If a team loses, they play other teams that have also lost until they lose a second time or they win the final game of the tournament.  These tournaments are organized as two cups where losers enter the second cup at various stages depending on when they lose their first game.
%
Finally, a \emph{round robin competition} is a competition where each team plays every other team a given number of times.  In a \emph{single round robin competition}, each team plays every other team exactly once.  Another common variant of this is for teams to play a \emph{double round robin competition} where each team plays every other team twice, often at home and away. 

\section{Manipulating the Tournament}

A \emph{tournament} is a directed graph $G = (V,E)$ where the underlying undirected graph is a complete graph.  We assume that the tournament is available for the remainder of the paper.  Every directed edge $(v_i,v_j) \in E$ represents a victory by $v_i$ over $v_j$.  The number of the teams in the competition is $\left|V\right|=m$.
We define a \emph{manipulation} of the tournament as any replacement of an edge $(v_i,v_j)$ in the graph with the edge $(v_j,v_i)$.  This is equivalent to a manipulation of votes but here we are changing the winner directly instead of just changing the vote.  Note that, as in election manipulation where the electoral vote is assumed to be known, we assume that we know, via an oracle, the relative strengths of teams and can represent the winner of the contests in the tournament graph.
We restrict manipulations by only allowing the manipulation of an edge $(v_i,v_j)$ if candidate $v_i$ is a member of the coalition.  This restricts the behaviour of the manipulators to throwing games where they could have won.  This restriction is due to the fact that it is simple to perform worse but more difficult to play better.
We consider two different types of manipulations.  A \emph{constructive manipulation} is one that ensures a specific team wins the competition.  A \emph{destructive manipulation} is one that ensures a specific team loses the competition.
For round robin competitions, we generalize the concept of the tournament beyond the simple win-loss scoring model to a complete graph where the edge $(v_i,v_j)$ has a non-negative weight $w_{ij}$ which represents the number of points that would be earned by $v_i$ when playing $v_j$ in a fair game.  We define a manipulation in this case as an outcome where the points earned in the match are different to those given by the tournament.  However, manipulations are restricted so that the manipulator achieves no more points and the team being manipulated achieves no less points.

In this section, we restrict ourselves to fixed cups with a known seeding.  We also look just at single round robin tournaments though the results generalize to multi-round robin tournaments. 

\subsection{Cup Competitions}

For cup competitions,
finding a constructive or destructive
manipulation of the tournament is polynomial.  Our results make
use of results in \cite{conitzer07} which shows that a manipulation 
of an election using the cup rule can be found in $O(m^{3}n)$ time where $m$ is the number of candidates and $n$ is the number of voters.

\begin{theorem}
\label{cup-constructive}
Determining if a cup competition can be constructively manipulated
using manipulations of the tournament takes polynomial time.
\end{theorem}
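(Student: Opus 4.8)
The plan is to adapt the cup-rule manipulation algorithm of Conitzer, Sandholm and Lang \cite{conitzer07} so that it operates on the tournament graph $G$ directly rather than on votes. Fix the distinguished team $p$ that the coalition $C \subseteq V$ wishes to make the overall winner, and recall that the seeding --- hence the binary-tree bracket and the assignment of teams to leaves --- is fixed. The first step is to encode the manipulation power of the coalition as a single binary relation on teams: say that \emph{$a$ can beat $b$}, written $a \succ b$, exactly when either $(a,b) \in E$ (so $a$ wins the fair game) or $(b,a) \in E$ with $b \in C$ (so $b$ is a coalition member who may throw a game it would otherwise win). This relation captures precisely the admissible outcomes of a single game under the restriction that only coalition members may lose games they could win, and crucially it can be read off $G$ and $C$ in constant time per pair.

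Given $\succ$, the second step is a bottom-up dynamic program that computes, for every node $n$ of the bracket, the set $\mathrm{PW}(n)$ of teams that can be made to emerge as the winner of the subtree rooted at $n$ under some admissible choice of outcomes for the games inside that subtree. For a leaf holding team $t$ we set $\mathrm{PW}(n) = \{t\}$, and for an internal node with children $n_1, n_2$ we set
\[
\mathrm{PW}(n) = \{\, t \in \mathrm{PW}(n_1) : \exists\, s \in \mathrm{PW}(n_2),\ t \succ s \,\} \cup \{\, s \in \mathrm{PW}(n_2) : \exists\, t \in \mathrm{PW}(n_1),\ s \succ t \,\}.
\]
The team $p$ can be made the champion if and only if $p \in \mathrm{PW}(\mathrm{root})$, which is the output of the algorithm. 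Correctness follows by induction on the height of the tree: in the forward direction any realized tournament makes the winner of each node beat the winner of its sibling subtree, so membership propagates up; in the reverse direction, given $t \in \mathrm{PW}(n_1)$ and a witness $s \in \mathrm{PW}(n_2)$ with $t \succ s$, we recursively realize $t$ as winner of the $n_1$ subtree and $s$ as winner of the $n_2$ subtree and then let $t$ win the game at $n$.

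The step I expect to carry the real content is justifying that these local choices compose into one globally consistent manipulation, and this is exactly where the fixed bracket helps: the subtrees rooted at $n_1$ and $n_2$ are \emph{vertex-disjoint}, each game is played by two teams of a common subtree, so a decision made inside one subtree never conflicts with a decision made inside a sibling subtree, and coalition membership is a fixed property that is never ``used up''. This independence is what keeps the problem polynomial rather than forcing an exponential search over joint outcomes; it is equivalent to the observation underlying the algorithm of \cite{conitzer07} that along $p$'s root-to-leaf path it suffices, independently at each round, to find some beatable team able to win the opposing subtree. Finally I would bound the running time: the bracket has $O(m)$ nodes, each $\mathrm{PW}$ set has at most $m$ teams, and evaluating the union at a node costs $O(m^2)$ comparisons, giving an $O(m^3)$ bound overall and establishing that constructive manipulation of a fixed, seeded cup is decidable in polynomial time.
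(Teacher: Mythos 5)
Your proposal is correct and follows essentially the same route as the paper: both adapt the Conitzer--Sandholm--Lang algorithm into a bottom-up computation of possible-winner sets over the fixed bracket, using exactly the relation ``$a$ beats $b$ fairly, or $b$ is a coalition member who throws the game,'' decidable in constant time per pair. Your $O(m^3)$ bound is looser than the paper's $O(m^2)$ (which uses the observation of Vu et al.\ that each pair of teams is compared at most once, since any two leaves' subtrees meet at a unique internal node), but polynomiality is all the theorem claims, so the proof stands.
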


\begin{proof}
This proof is a bottom up version the proof of Theorem 2 from Conitzer, Sandholm and Lang (CSL)\cite{conitzer07} but substitutes tournament manipulations for voting manipulations.  The basic CSL algorithm is a recursive method that treats each node in the tree (which is not a leaf) as a sub-election (see Algorithm {CSL}).  Conitzer et al.~\cite{conitzer07} note that a team wins a sub-election if and only if they must win one of its children and they can defeat one of the potential winners on the other side.  It is perhaps simpler to understand this algorithm from a bottom up perspective.  Observe that if we have two leaf nodes $v_i$ and $v_j$ and there exists an arc in the tournament $(v_i,v_j)$ then $v_i$ wins the match and is a potential winner of the sub election between $v_i$ and $v_j$.  Now suppose that $v_i$ is in the member of the coalition so it is possible for them to replace $(v_i,v_j)$ with $(v_j,v_i)$ in the tournament and therefore $v_j$ is also a potential winner of the sub-election via manipulation.  Assume we have some sub-election in the middle of the tournament with two sets of potential winners $A$ and $B$.  Any team from $A$ is a potential winner of the sub-election if there exists a team in $B$ that they can defeat or if a coalition member in $B$ throws a game.  The same is true for teams in $B$.  Therefore, there is a constructive manipulation if the desired winner is a member of the potential winners at the top node in the cup tree. 

The original algorithm looked at $O(m^{2})$ pairs of opponents as no two teams were compared more than once.  Note that the original analysis provided a looser $O(m^3)$ bound on the number of comparisons, but this can be tightened by an observation of Vu et al.~\cite{vu09}.  The difference between direct manipulation of the tournament and the method by Conitzer, Sandholm and Lang is that determining if a team could defeat another team meant summing all values of the $n$ voters requiring $O(n)$ time whilst in the direct manipulation of the tournament this can be done in constant time.  Therefore, constructive manipulation of the tournament under the cup rule takes just $O(m^{2})$ time.\qed
\end{proof}

\begin{algorithm}[t]
\Titleofalgo{CSL($v_w$,$c$,$T$,$C$)}
\label{ElimSetFlowAlgo}
\SetArgSty{textrm}
\SetKwData{Winners}{winners}
\SetKwData{Graph}{G}
\SetKwFunction{PW}{PossibleWinners}
\SetKwInOut{Input}{input}
\SetKwInOut{Output}{output}

\Input{A team $v_w$, a cup tree $c$, a tournament graph $T$, and a coalition of teams $C$}
\Output{Returns true if $v_w$ can win via manipulation and false otherwise}
\BlankLine
\SetVline
\Winners $\leftarrow$ \PW{$c$,$T$,$C$}\;
\eIf{$v_w \in \Winners$}
{
	\Return{true}\;
}
{
	\Return{false}\;
}
\end{algorithm}

\begin{procedure}[t]
\Titleofalgo{PossibleWinners($c$,$T$,$C$)}
\SetKwData{Winners}{winners}
\SetKwData{lw}{LeftWinners}
\SetKwData{rw}{RightWinners}
\SetKwFunction{PW}{PossibleWinners}
\SetKwFunction{Add}{add}
\SetKwFunction{Right}{right}
\SetKwFunction{Left}{left}
\SetKwFunction{Leaf}{leaf}
\SetKwInOut{Input}{input}
\SetKwInOut{Output}{output}

\Input{A cup tree $c$, a tournament graph $T$ and a coalition of teams $C$}
\Output{Returns the set of possible winners of the cup tree via manipulation of the tournament by the coalition}
\BlankLine
\SetVline

\eIf{ \Leaf{$c$} }{
	\Return{$\{ c \}$}\;
}{
	\Winners $\leftarrow \{ \}$\;
	\lw $\leftarrow$ \PW{\Left{$c$} ,$T$,$C$}\;
	\rw $\leftarrow$ \PW{\Right{$c$} ,$T$,$C$}\;
	\ForAll{$v_i \in \lw$}
	{	
		\If{$\exists v_j \in \rw \quad \mbox{such that} \quad (v_i,v_j) \in E \lor v_j \in C$}
			{
				\Add{\Winners,$v_i$}\;
			}
	}
	\ForAll{$v_j \in \rw$}
	{	
		\If{$\exists v_i \in \lw \quad \mbox{such that} \quad (v_j,v_i) \in E \lor v_i \in C$}
			{
				\Add{\Winners,$v_j$}\;
			}
	}
	\Return{\Winners}\;
}
\end{procedure}

We observe that destructive manipulation of a competition using tournament manipulations is similar since this simply requires determining if there is at least one other possible winner of the tournament via manipulations.

\begin{theorem}
Determining if a cup tournament can be destructively
manipulated using tournament manipulations takes polynomial time.
\end{theorem}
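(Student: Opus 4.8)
The plan is to reduce destructive manipulation to the constructive machinery already established in Theorem~\ref{cup-constructive}. First I would observe that a single-elimination cup produces exactly one champion, so a target team $v_d$ fails to win the competition if and only if some other team $v_w \neq v_d$ ends up as champion. This converts the destructive question ``can the coalition force $v_d$ to lose?'' into the purely existential question ``is there any team distinct from $v_d$ that the coalition can make win?''

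Next I would invoke the correctness of the \texttt{PossibleWinners} procedure from the proof of Theorem~\ref{cup-constructive}: a team can be crowned champion through valid tournament manipulations (those in which only coalition members throw games) precisely when it belongs to the set returned by \texttt{PossibleWinners}$(c,T,C)$. Crucially, this set is never empty, since running the cup with no games thrown yields a genuine champion who is trivially a possible winner. Combining the two observations, a destructive manipulation against $v_d$ exists if and only if \texttt{PossibleWinners}$(c,T,C)$ contains at least one team other than $v_d$; moreover, whenever such a $v_w$ exists, the very manipulation that makes $v_w$ champion simultaneously guarantees that $v_d$ loses, because the cup has a unique winner.

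To finish I would note that the whole computation consists of a single call to \texttt{PossibleWinners}, which Theorem~\ref{cup-constructive} shows runs in $O(m^2)$ time, followed by a scan of the returned set (of size at most $m$) to test whether any element differs from $v_d$. The procedure is therefore polynomial, which is exactly the claim.

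The step that deserves the most care is the equivalence in the middle paragraph rather than any calculation: one must verify that ``$v_d$ loses'' and ``some other team wins'' are genuinely interchangeable, which rests on the uniqueness of the cup champion, and that no coordination conflict arises. Making one fixed alternative $v_w$ the champion is self-contained and imposes no requirement on $v_d$, so the restriction that only coalition members may throw games is automatically respected by the witnessing manipulation. Consequently no new algorithmic idea is needed beyond Theorem~\ref{cup-constructive}; the substance of the argument is the reduction itself.
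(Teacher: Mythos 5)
Your proof is correct and follows essentially the same route as the paper, which likewise reduces destructive manipulation to checking whether any team other than the target can be made a constructive winner. Your only refinement is observing that a single call to \texttt{PossibleWinners} (rather than a separate constructive check per team) already yields the full set of possible winners, giving $O(m^2)$ instead of $O(m^3)$; the underlying argument is identical.
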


\begin{proof}
We just determine if we can constructively manipulate the tournament for each other team in turn than the one we wish to lose.\qed
\end{proof}

\subsection{Round Robin Competition}

For round robin competitions, manipulations of the tournament
can be computed in polynomial time for a restricted class of scoring models.  We define a \emph{scoring model} to be the set of tuples giving the possible outcomes of a game.  Copeland scoring has a simple win-loss ($\{(0,1),(1,0)\}$) scoring model where the wining team earns one point and the losing team earns none.  Bartholdi, Tovey and Trick\cite{bartholdi89} showed that constructive manipulation can be determined in polynomial time for a chess scoring model ($\{(0,1),(\frac{1}{2},\frac{1}{2}),(1,0)\}$).  Faliszewski et al.~\cite{faliszewski08} showed that for a range of scoring models manipulating Copeland voting is NP-Complete.

First, we discuss the problem of determining which games need to be manipulated to ensure that a given team $v_{w}$ wins the competition.  Clearly, there are some games that cannot be affected by the coalition and are fixed.  All other games are manipulable.  Games between coalition members can earn any of the possible scores allowed by the scoring model.  We restrict games against non-coalition members by only allowing the manipulator to earn less points and the non-member earns more.  Determining if a given team can be made a winner is analogous to determining if a team wins a round robin tournament when the fixed games have been played and the manipulable games have not been played.  The restriction of the outcomes on games between coalition and non-coalition members requires that the games have outcomes within only a subset of the scoring model.  Using this observation, we obtain the following theorem.

\begin{theorem}
\label{round-robin-constructive}
Determining if there exists a constructive manipulation of a round robin competition is polynomial if the normalized scoring model is of the form $S=\{(i,n-i) \mid 0 \leq i \leq n\}$ and NP-complete, otherwise.
\end{theorem}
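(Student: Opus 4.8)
The plan is to prove a complexity dichotomy by tying constructive manipulation to the winner-determination (sports elimination) problem studied by Kern and Paulusma \cite{kern04}. First I would make the reduction sketched before the theorem precise. Given a coalition $C$ and a target $v_w$, I classify each game as \emph{fixed} (both endpoints outside $C$, or $v_w$ against a non-member, where $v_w$ can do no better than its fair result) or \emph{manipulable}. A manipulable game between two coalition members may take any outcome in $S$, whereas a manipulable game between a member and a non-member may only shift points away from the member toward the non-member, so its outcome is constrained to a subset of $S$. Deciding whether $v_w$ can be made a (co-)winner is then exactly the question of whether the manipulable games can be assigned outcomes from their admissible subsets of $S$ so that $v_w$'s final score is at least that of every other team, i.e.\ a winner-determination instance whose unplayed games carry scoring-model restrictions.

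For the polynomial direction, suppose $S=\{(i,n-i)\mid 0\le i\le n\}$, so every game distributes exactly $n$ points. Since awarding points to $v_w$ never hurts and leaves each game total unchanged, I would first let $v_w$ win all of its manipulable games, fixing its score to the maximum value $V$. It then remains to decide whether the points of each remaining manipulable game can be split so that no other team exceeds $V$. I would model this as an integral feasibility flow in the style of baseball elimination: a source feeds one node per manipulable game through an edge whose capacity is the number of points that game is free to distribute, each game node sends to the team nodes eligible to receive those points, and each team node drains to the sink through an edge whose capacity is the slack between $V$ and that team's forced base score. The asymmetric member/non-member restriction is captured simply by how many points each game may route and to which endpoints. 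A flow saturating all source edges exists if and only if $v_w$ can be made a winner, and maximum flow is computable in polynomial time.

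For the NP-complete direction, I would invoke the dichotomy of Kern and Paulusma \cite{kern04}: whenever the normalized scoring model is not of the constant-sum form $\{(i,n-i)\}$, deciding whether a team can still win is NP-complete. To transfer this hardness to manipulation I would place \emph{all} teams into the coalition $C$ in the constructed instances, so that every manipulable game may take any outcome in $S$; the subset restriction then vanishes and the manipulation instance coincides with the winner-determination instance used in their hardness proof. I expect this direction to be the main obstacle, since one must verify that the admissible-subset constraints introduced by the manipulation restriction can always be made harmless on the reduction's gadget games, and that the boundary ``not of the form $S=\{(i,n-i)\}$'' matches exactly the NP-complete side of the Kern--Paulusma classification rather than merely being contained in it.
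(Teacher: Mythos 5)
Your overall strategy---tying constructive manipulation to the Kern--Paulusma winner-determination problem \cite{kern04}---is the same as the paper's, and your polynomial direction is sound. Letting $v_w$ take the maximum in every game it can influence is safe precisely because the model is constant-sum, and your flow network is a correct, more explicit rendering of what the paper does more tersely: it observes that a restricted member/non-member game with fair result $(c_i,c_j)$ has admissible outcomes from $(c_i,c_j)$ to $(n,0)$, normalizes this to a constant-sum model $\{(0,c_j),\ldots,(c_j,0)\}$ with $c_i$ base points awarded to the non-member, and then simply cites Kern and Paulusma's polynomial algorithm rather than rebuilding the elimination flow by hand. So on that side you have essentially re-derived the cited machinery, which is fine.

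The genuine gap is in the NP-completeness direction. Placing \emph{all} teams into the coalition does not make your manipulation instance ``coincide with the winner-determination instance used in their hardness proof'': with every team in $C$, \emph{every} game becomes freely assignable from $S$, whereas the Kern--Paulusma hardness instances are partial schedules in which the already-played games are fixed and only the remaining games are free. Your construction therefore maps their gadgets to instances with no fixed games at all, and the hardness of that special case does not follow from their theorem (indeed, with nothing pinned down such instances may admit easy greedy solutions, e.g.\ choosing Pareto-minimal outcomes for games not involving $v_w$). The reduction has to go the other way: keep the coalition \emph{small}, realizing each played game as a game between two non-members (hence fixed) and each remaining game as a game between two coalition members (hence free from $S$); the crossing member/non-member games are then exactly where the restricted admissible subsets---and the paper's normalization observation, e.g.\ choosing the fair result of a crossing game extremal so that its restricted set collapses to a single outcome---must be deployed, together with a check that the Kern--Paulusma instances admit such a bipartition. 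You correctly identified the admissible-subset constraints as the obstacle, but your proposed fix (coalition $=$ all teams) destroys the fixed games the hardness reduction depends on rather than rendering the constraints harmless.
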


\begin{proof}
This proof uses the equivalence of determining whether a team can win a tournament and determining if a constructive manipulation exists with a set of fixed and manipulable games.  Note that a game between a non-coalition member $v_i$ and a coalition member $v_j$ is unfixed but the scores that can be assigned are restricted.  When the scoring model is of the form $S=\{(i,n-i) \mid 0 \leq i \leq n\}$ and the initial result of the game is $(c_i,c_j)$, then the remaining valid scores that can be assigned are those from $(c_i,c_j)$ to $(n,0)$.  By normalizing this new model, we obtain one in which the non-coalition member earns $c_i$ points by default and the result of the game is scored from the model $\{(0,c_j),\ldots,(n-c_i,0)\}$ which is of the form $S=\{(i,n-i) \mid 0 \leq i \leq n\}$.  Kern and Paulusma \cite{kern04} showed that determining if a team can win a tournament (i.e. is not eliminated from competition) takes polynomial time if the normalized scoring model is of the form $S=\{(i,n-i) \mid 0 \leq i \leq n\}$ and is NP-complete otherwise.\qed
\end{proof}


By comparison, it is always polynomial to determine if a destructive 
manipulation exists. 

\begin{theorem}
Determining if there is a destructive manipulation of a round
robin competition takes polynomial time.
\end{theorem}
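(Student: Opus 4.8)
The plan is to exploit the fact that a destructive manipulation demands far less than a constructive one. To force the target team $v_w$ to lose we need not identify the eventual champion; we only need \emph{some} team $v_k \neq v_w$ to finish with more points than $v_w$. Reformulating the goal this way is exactly what keeps the problem polynomial for \emph{every} scoring model, including those for which Theorem~\ref{round-robin-constructive} makes constructive manipulation NP-complete, because it sidesteps the global winner-determination step that causes the hardness there.

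Fix a candidate team $v_k \neq v_w$ and consider the score gap $\Delta_k = \mathrm{score}(v_k) - \mathrm{score}(v_w)$, which I would maximize over all admissible manipulations. The crucial observation is that $v_k$ and $v_w$ share only a single game, so $\Delta_k$ decomposes additively over the games of the round robin and each game can be optimized independently: a game of $v_k$ against a coalition member is thrown to $v_k$ so as to maximize its points; a game of $v_w$ against a coalition member is driven to $v_w$'s minimum (which, by the throwing restriction, equals the default unless $v_w$ itself is a manipulator able to throw it); the one game between $v_k$ and $v_w$ is set to the admissible outcome maximizing $v_k$'s lead; and every remaining game contributes nothing to $\Delta_k$. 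Summing these per-game optima yields $\max \Delta_k$ in $O(m)$ time, and no global coupling between games can spoil the decomposition.

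I would then establish the characterization that a destructive manipulation exists if and only if $\max_k \Delta_k > 0$ for some $v_k$ (with $>$ replaced by $\geq$ if $v_w$ is deemed to lose whenever it merely fails to be the sole top scorer). The converse is immediate, since in \emph{any} manipulation and for every $v_k$ we have $\mathrm{score}(v_k) - \mathrm{score}(v_w) \leq \max \Delta_k$, so if no gap is positive then $v_w$ is a (co-)winner in every outcome. The forward direction is witnessed by the maximizing manipulation for whichever $v_k$ clears the bar. Ranging over the $m-1$ candidates, each tested in $O(m)$, gives an $O(m^2)$ algorithm. The part needing the most care is the independence claim of the second paragraph: I would check, in each case according to which of $v_k$ and $v_w$ belongs to the coalition, that the throwing restriction indeed permits the per-game choice I want and that the only shared game poses no conflict, since this is where a general (non-normalized) scoring model could otherwise couple the two objectives.
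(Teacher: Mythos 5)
Your proposal is correct and takes essentially the same route as the paper: the paper also fixes each candidate $v_k$ in turn, greedily applies per-game the admissible manipulations that maximize $v_k$'s points and (only when the target team is itself a coalition member) minimize the target's points, and checks whether any resulting total exceeds the target's, for $O(m^2)$ time overall. Your explicit additive decomposition of the gap $\Delta_k$, the observation that the two teams share exactly one game, and the two-directional characterization merely make rigorous what the paper's proof asserts informally.
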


\begin{proof}
Assume that $v_l$ is the team that the coalition desires to lose.  It is sufficient to check whether the maximum points of another team via manipulation is greater than the points of $v_l$.  If $v_l$ is a member of the coalition and therefore a manipulator, for each team $i$ that we check for points, we apply only manipulations that increase the relative points between $i$ and $v_l$.  For all other teams, we apply the manipulation which decreases the points of $v_l$ the most.  If $v_l$ is not a member of the coalition, no games involving $v_l$ may be manipulated since we restrict manipulations to allow only those manipulations that increase the points of $v_l$ and increase the relative gap between $v_l$ and the manipulator.  Therefore, no other team is better off when games involving $v_l$ are manipulated.  In both cases, we apply the manipulation that increase the points of the team under consideration against all other teams.  If the total number of points of any other team is greater than the points of $v_l$ under these manipulations, then there is a destructive manipulation of $v_l$.  This algorithm can be run in $O(n^2)$ time.\qed
\end{proof}

A further complication is when the goal of manipulation is just to earn a berth in the next round of the playoffs.  It is NP-hard to decide these questions under most playoff systems for all scoring models \cite{mccormick99,gusfield02}.

\section{Minimizing Manipulations}

The number of manipulations required is an important factor.  
It may be advantageous for the coalition to manipulate as few games as possible to avoid detection or to minimize the cost of bribing players.  We show that there is a polynomial algorithm to calculate manipulations which throw a minimal number of games.  This highlights the vulnerability of the two most common types of competitions in sports to manipulation.

\subsection{Minimal Number of Manipulations for Cup Competitions}

Computing the minimal number of manipulations
simply requires keeping a count within our algorithm for 
computing a manipulation. 
We give some notation to identify a specific sub-election in the cup.  We let $s_{\ell}^{v_i}$ be the sub-election at level $\ell$ where $v_i$ is a leaf node of a sub tree below $s_{\ell}^{v_i}$.  We denote the level as the height from the bottom of the cup tree, which is assumed to be a perfect binary tree.  We also define level 0 to be the level belonging to the leaves.
We have $m^{2}$ constants $c_{ij}$ that are 1 if $(v_j,v_i) \in M$ and 0 otherwise, where $M \subseteq E$ is the set of edges which can be manipulated by the coalition.  This corresponds to $c_{ij}=1$ when a manipulation must occur for $v_i$ to win and 0 otherwise.
Finally, we define the minimal number of manipulations needed to win a sub-election $s_{\ell}^{v_i}$, $m(v_i,s_{\ell}^{v_i})$, to be sum of the minimal number of the manipulations for $v_i$ to win one of the children of $s_{\ell}^{v_i}$, and the minimum number of manipulations plus $c_{ij}$ over all possible winners of the other child which $v_i$ can defeat.  We denote the set of teams that $v_i$ can defeat either as described in the tournament or by manipulation as $D_i$.  More formally, the minimal number of manipulations for $v_i$ at $s_{\ell}^{v_i}$ ($\ell \geq 0$) is given by:

\begin{equation}
m(v_i,s_{\ell}^{v_i}) = \left \{ \begin{array}{ll}
0 & \quad \textrm{if}\,\ell = 0\\
m(v_i,s_{\ell-1}^{v_i}) + \min_{v_j\in D_i}(m(v_j,s_{\ell-1}^{v_j})+c_{ij})& \quad \textrm{if}\,\ell > 0
\end{array} \right. .\nonumber
\end{equation}

\begin{lemma}
\label{lemma-cup-min}
The minimal number of manipulations needed to make a team $v_i$ a winner at level $n$ in the tree is equal to $m(v_i,s_{n}^{v_i})$.
\end{lemma}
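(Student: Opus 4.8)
The plan is to prove the lemma by induction on the level $\ell$, showing that the value produced by the recurrence defining $m(v_i,s_{\ell}^{v_i})$ coincides with the true minimal number of games that must be thrown to make $v_i$ a winner of the sub-election $s_{\ell}^{v_i}$; applying this at $\ell=n$ then yields the statement. For the base case $\ell=0$ the sub-election is a single leaf, so $v_i$ is trivially a winner with no games thrown, which matches $m(v_i,s_{0}^{v_i})=0$.

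For the inductive step I would assume the claim for every sub-election at levels strictly below $\ell$ and analyze what it takes for $v_i$ to win $s_{\ell}^{v_i}$. The node $s_{\ell}^{v_i}$ has two children: the sub-election $s_{\ell-1}^{v_i}$ containing $v_i$, and the sibling sub-election, whose leaves are disjoint from those of $s_{\ell-1}^{v_i}$. For $v_i$ to win $s_{\ell}^{v_i}$, two things must happen: $v_i$ must first win $s_{\ell-1}^{v_i}$, and then $v_i$ must defeat whichever team $v_j$ emerges from the sibling. By the induction hypothesis the minimal cost of the former is $m(v_i,s_{\ell-1}^{v_i})$, and the minimal cost of bringing a chosen $v_j$ to the top of the sibling is $m(v_j,s_{\ell-1}^{v_j})$, to which we add $c_{ij}$ for the single game between $v_i$ and $v_j$ decided at the root of $s_{\ell}^{v_i}$ (costing one manipulation precisely when $(v_j,v_i)\in M$). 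Since $v_i$ can only advance against an opponent it is able to beat, the opponent ranges over $D_i$, and minimizing the sibling-plus-final-game cost over $v_j\in D_i$ reproduces the second term of the recurrence.

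The crux is justifying that these two costs simply add and may be optimized separately. This rests on the observation that the games manipulated inside $s_{\ell-1}^{v_i}$, the games manipulated inside the sibling, and the final game at the root of $s_{\ell}^{v_i}$ lie in pairwise disjoint edge sets of the tournament: the two subtrees share no leaves, and the final game is contested only at the root. Hence no game is counted twice, no manipulation forced on one side constrains the other, and the coalition is free to steer the sibling to whichever defeatable team $v_j$ yields the smallest total, at its independently computed cost; any $v_j\notin D_i$ leaves $v_i$ unable to win and is correctly excluded.

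I expect the main obstacle to be arguing this independence cleanly rather than the induction itself, specifically ruling out any hidden interaction whereby a potential winner on one side could share a manipulable game with the other. Because the subtrees partition the leaves and every game is internal to exactly one sub-election, this reduces to a disjointness check, after which additivity and the separate minimizations follow immediately and the recurrence is seen to compute exactly the minimal manipulation count at level $n$.
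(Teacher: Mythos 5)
Your proof is correct and takes essentially the same route as the paper's: induction on the level, with base case $m(v_i,s_{0}^{v_i})=0$ at the leaves and an inductive step that decomposes the minimal cost at an internal node into the cost of $v_i$ winning its own child plus $\min_{v_j\in D_i}\bigl(m(v_j,s_{\ell-1}^{v_j})+c_{ij}\bigr)$ for the sibling and the root game. If anything, yours is slightly more complete, since you explicitly justify the additivity and separate optimization via the pairwise disjointness of the manipulated edge sets (the two subtrees share no leaves, and the root game is a distinct edge), a point the paper's proof asserts without argument.
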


\begin{proof}
By induction.  First, observe that the minimal number of manipulations at a leaf is 0.  Hence, $m(v_i,s_{0}^{v_i})=0$ for all leaves $v_i$.  Next note that at level 1 there are only 2 nodes in the possible winner sets of the leaves.  Therefore if $v_i$ can defeat $v_j$, $m(v_i,s_{1}^{v_i}) = m(v_i,s_{0}^{v_i}) + m(v_j,s_{0}^{v_j}) + c_{ij} = c_{ij}$ which is the exact number of manipulations that have occurred to make $v_i$ a possible winner so far.  We assume the premise for $1 < n \leq k$.  Now, $m(v_i,s_{k+1}^{v_i}) = m(v_i,s_{k}^{v_i}) + \min_{v_j\in D_i}(m(v_j,s_{k}^{v_j})+c_{ij})$. We know that $m(v_i,s_{k}^{v_i})$ is the minimal number of manipulations for $v_i$ up to level $k$ by the assumption and, for every $v_j\in D_i$, we know that $m(v_j,s_{k}^{v_j})$ is also the minimal number of manipulations for each $v_j$ up to level $k$.  By definition, $c_{ij}$ is the number of manipulations for $v_i$ to defeat $v_j$.  Since $v_i$ can defeat any $v_j$ in $D_i$, the one with the fewest previous manipulations to reach $k$ plus $c_{ij}$ leads to the fewest manipulations in total to make $v_i$ win the sub election $s_{k+1}^{v_i}$.  This equals the minimum over the set $D_i$. Therefore the lemma holds for $k+1$ and, by induction, all $n$ levels of the tree.\qed
\end{proof}

\begin{theorem}
\label{theorem-cup-min}
A modified CSL algorithm, where the team which minimizes the value of $m(v_i,s_{n}^{v_i})$ is selected to lose to team $v_i$ at every node $s_{n}^{v_i}$, calculates the minimal number of manipulations needed to constructively or destructively manipulate a cup competition in polynomial time.
\end{theorem}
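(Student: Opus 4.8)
The plan is to handle the constructive and destructive cases separately: correctness of the constructive case follows almost immediately from Lemma~\ref{lemma-cup-min}, the destructive case reduces to it, and the time bound is obtained by reusing the comparison-counting argument behind Theorem~\ref{cup-constructive}.

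First I would treat constructive manipulation. Running the modified CSL algorithm bottom up, at each sub-election node $s_{\ell}^{v_i}$ and for each potential winner $v_i$ of that node the algorithm records not merely that $v_i$ can win but the value $m(v_i,s_{\ell}^{v_i})$ given by the recurrence, selecting at that node the opponent $v_j \in D_i$ that minimizes $m(v_j,s_{\ell-1}^{v_j})+c_{ij}$. By Lemma~\ref{lemma-cup-min}, the value accumulated at the root, $m(v_w,s_{n}^{v_w})$, equals the minimal number of manipulations needed to make the desired team $v_w$ the overall champion, and this is exactly what the modified algorithm returns; so correctness in the constructive case is immediate.

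Next I would treat destructive manipulation. The key observation is that the set $M$ of manipulable edges is fixed by the coalition and does not depend on the target team, so a single bottom-up pass simultaneously produces $m(v_w,s_{n}^{v_w})$ for every potential winner $v_w$. Since making $v_{\ell}$ lose is equivalent to making some other team the champion, and any outcome with champion $v_w \neq v_{\ell}$ uses at least $m(v_w,s_{n}^{v_w})$ manipulations, the minimal cost of a destructive manipulation of $v_{\ell}$ is $\min_{v_w \neq v_{\ell}} m(v_w,s_{n}^{v_w})$; this value is achieved and equals $0$ precisely when $v_{\ell}$ is not already the unmanipulated winner. It is obtained with one extra $O(m)$ scan over the root values.

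Finally I would bound the running time. The only change from the algorithm of Theorem~\ref{cup-constructive} is that each pairwise comparison additionally evaluates $m(v_j,s_{\ell-1}^{v_j})+c_{ij}$ and updates a running minimum, which is constant extra work per comparison. By the observation of Vu et al.~\cite{vu09} used in Theorem~\ref{cup-constructive}, each pair of teams is compared at most once across the whole tree, so there are at most $\binom{m}{2}=O(m^2)$ comparisons; hence the modified algorithm still runs in $O(m^2)$ time, and the extra $O(m)$ scan in the destructive case does not change this. The main obstacle, I expect, is to argue carefully that introducing the minimization at every node does not inflate the asymptotic cost, that is, that the $O(m^2)$ comparison bound survives the bottom-up aggregation of minimal counts and that a single pass suffices for all targets rather than forcing a recomputation of sub-election winners for each team considered.
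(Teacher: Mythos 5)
Your proposal is correct and follows essentially the same route as the paper's proof: correctness at the root via Lemma~\ref{lemma-cup-min}, destructive manipulation as the minimum of $m(v_w,s_{n}^{v_w})$ over all $v_w \neq v_{\ell}$ from the same bottom-up pass, and the $O(m^2)$ bound preserved because tracking the minimum adds only constant work per comparison. Your treatment is in fact somewhat more explicit than the paper's (e.g., noting that one pass serves all target teams and that the final scan is $O(m)$), but there is no substantive difference in approach.
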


\begin{proof}
By Lemma \ref{lemma-cup-min}, the value of $m(v_w,s_{n}^{v_w})$ at the root node is the minimal number of manipulations which ensures $v_w$ is the winner.  Hence, we just need to show that the algorithm remains polynomial.  The modified CSL algorithm still makes $O(m^2)$ comparisons.  The only difference is that we have to calculate the minimum which can be done by storing the minimum as each team is checked.  Therefore, the time complexity remains $O(m^2)$ and calculating the minimum is polynomial.  Constructive manipulation requires calculating $m(v_w,s_{n}^{v_w})$ whilst destructive manipulation requires the minimum over all other teams.\qed
\end{proof}

\subsection{Minimal Number of Manipulations for Round Robin Competitions}

We consider here just Copeland scoring. We conjecture that similar methods 
could be developed for other scoring schemes. 

\begin{definition}
Given a tournament $T=(V,E)$ where $V = \{v_1,\ldots,v_n\}$, a set of manipulable edges $M \subseteq E$,and a distinguished node $v_{w}$, the Minimal Number of Manipulations under Copeland Scoring is the problem of determining the minimal number of edges in $M$ that can be reversed such that $\forall_{v_k\in V,v_w \neq v_k}\\ outDegree(v_{w}) \geq outDegree(v_{k})$.
\end{definition}

Note that Copeland Scoring is the simple win-loss method of scoring where the winning team earns 1 point and the losing team earns 0 points.  Before we show how to calculate the minimal number of manipulations, we show that we can determine the out degree, i.e.~the Copeland Score, of the distinguished node using a minimal number of manipulations in isolation with a greedy algorithm.  The intuition behind this is that we select manipulations to increase the out degree of $v_w$. 

\begin{lemma}
The value of $outDegree(v_{w})$ can be determined in isolation by greedily using, in sequence, a minimal number of manipulations of edges $(v_i,v_w) \in M$ where $\forall_{(v_j,v_w)\in M, v_j \neq v_i} outDegree(v_i) \geq outDegree(v_j)$ until $\forall_{v_k\in V,v_w\neq v_k}\\ outDegree(v_{w}) \geq outDegree(v_{k})$.
\end{lemma}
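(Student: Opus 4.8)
The plan is to track the effect of each individual manipulation on the Copeland scores, argue that the greedy choice is optimal for any fixed budget by an exchange argument, and then convert that into the stated stopping rule by a monotonicity argument.

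First I would record the elementary bookkeeping. Reversing a manipulable edge $(v_i,v_w)\in M$ replaces $v_i$'s win over $v_w$ by a win for $v_w$, so it raises $outDegree(v_w)$ by exactly one, lowers $outDegree(v_i)$ by exactly one, and leaves every other out-degree untouched. Writing $d_k$ for the initial value of $outDegree(v_k)$, after any set $S$ of $t$ such manipulations we have $outDegree(v_w)=d_w+t$, each reversed opponent sits at $d_i-1$, and all others remain at $d_k$. A key consequence is that reversing one opponent does not change the current out-degree of any other manipulable opponent, since only the $v_i$-versus-$v_w$ result is altered. Hence the greedy rule ``reverse the manipulable edge whose tail has maximum current out-degree'' is identical to processing the manipulable opponents in non-increasing order of their initial out-degree, and the set reversed after $t$ greedy steps is exactly the $t$ manipulable opponents of largest initial out-degree, which I will call $G_t$.

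Next I would prove the core claim: for every fixed budget $t$, reversing $G_t$ minimises the largest out-degree occurring among all opponents after manipulation. This is an exchange argument. Suppose some $t$-element choice $S\neq G_t$ did strictly better; then there are $v_a\in G_t\setminus S$ and $v_b\in S\setminus G_t$ with $d_a\ge d_b$. Passing to $S'=(S\setminus\{v_b\})\cup\{v_a\}$ changes $v_a$'s contribution from $d_a$ to $d_a-1$ and $v_b$'s from $d_b-1$ to $d_b\le d_a$, while touching nothing else, so the maximum cannot increase. Iterating turns any $S$ into $G_t$ without ever raising the maximum, so $G_t$ is optimal. Let $f(t)$ denote this minimal achievable maximum opponent out-degree; since one extra reversal can only lower a score, $f$ is non-increasing in $t$.

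Finally I would settle minimality of the count. After $t$ greedy steps the winning condition $outDegree(v_w)\ge outDegree(v_k)$ for all $k\neq w$ is precisely $d_w+t\ge f(t)$. The left side strictly increases in $t$ while $f(t)$ is non-increasing, so the greedy process halts at the first index $g$ with $d_w+g\ge f(g)$, reporting $outDegree(v_w)=d_w+g$. For optimality, suppose some feasible manipulation used only $t'<g$ incident reversals; its maximum opponent out-degree is at least $f(t')$, so feasibility forces $d_w+t'\ge f(t')$, meaning $t'$ already meets the greedy stopping condition and contradicting the minimality of $g$. The step I expect to be the main obstacle is exactly this exchange-plus-monotonicity pairing: one must verify carefully, handling ties in out-degree, that beating the currently strongest rival is never wasteful, since each move raises $v_w$ by the same amount regardless of target and the only freedom is which rival to depress. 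I would also flag one modelling caveat explicitly, namely that the lemma treats $v_w$'s score \emph{in isolation}: it presumes the winning condition is reachable using only edges into $v_w$, whereas opponents that $v_w$ already beats, or whose edge lies outside $M$, cannot be weakened here and are deferred to the later flow-network construction.
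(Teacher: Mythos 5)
Your exchange-plus-monotonicity argument is correct for what it proves, and on that part it is actually tighter than the paper's own treatment: the paper disposes of non-overshooting with an informal contradiction (it asserts that since the greedy always selects the edge whose source has maximal out-degree, it always decreases the maximum out-degree whenever possible), whereas your bookkeeping observation that incident reversals do not interact --- so $t$ greedy steps reverse exactly a top-$t$ set $G_t$ of initial out-degrees --- combined with the swap argument that $G_t$ minimises the maximum rival out-degree for every budget $t$, and the monotone race between the strictly increasing $d_w+t$ and the non-increasing $f(t)$, pins down the stopping rule rigorously, ties included.

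There is, however, a genuine omission relative to the lemma as the paper proves it: you establish minimality only within the class of strategies that reverse edges incident to $v_w$, and you explicitly set aside everything else as a modelling caveat. The paper's proof has two parts, and the part you deferred is the first of them. It argues that incident manipulations \emph{dominate} manipulations of games between two other teams: reversing $(v_i,v_w)$ raises $outDegree(v_w)$ by one, closing the gap to \emph{every} rival simultaneously, and lowers $v_i$ besides; reversing $(v_i,v_j)$ with $v_i,v_j \neq v_w$ leaves $v_w$ unchanged, lowers one rival and raises another, so when two or more rivals exceed $v_w$ it takes at least two such moves to achieve what one incident move does, and even against a single rival the incident move is weakly preferable because the non-incident one boosts a third team. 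This dominance is what licenses reading ``a minimal number of manipulations'' as minimal over \emph{all} coalition manipulations rather than over incident-only strategies, and the subsequent theorem's accounting (greedy count plus min-cost-flow count equals the overall minimum) leans on it. Your closing contradiction quantifies only over competitors using $t' < g$ incident reversals; a competitor mixing in reversals between other teams could a priori push its maximum rival degree below $f(t')$, so your argument does not yet exclude it. The missing step is short --- per reversal, an incident move reduces the deficit $\max_{k \neq w} outDegree(v_k) - outDegree(v_w)$ at least as much as any non-incident move --- but it is half of the paper's proof, not a side condition to hand off to the flow-network construction.
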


\begin{proof}
First, we prove that it always uses the least number of manipulations to increase the out degree of $v_w$. To reduce the out degree of two or more nodes that have an out degree larger than $v_w$, it takes at least two manipulations but to increase the out degree of $v_w$ by the same amount takes just one.  For a single node, it is preferred to use the manipulation involving $v_w$ since the other node may increase the out degree of another node requiring more manipulations.  Therefore, using manipulations involving $v_w$ is most efficient. 

Now we show that we never overshoot the stopping criteria and use more than a minimal number of manipulations.  Assume that we use more than the minimal number of manipulations.  This means that we selected an edge that did not decrease the maximum out degree when there existed an edge that would have decreased the maximum out degree of all nodes that we did not select.  However, since we always selected the edge where the source node had the maximal out degree within $M$, we always decreased the maximum out degree whenever possible.  This is a contradiction and the greedy algorithm only uses a minimal number of manipulations when reaching the stopping condition.\qed
\end{proof}

\begin{theorem}
Determining the minimal number of tournament manipulations required under Copeland Scoring takes polynomial time.
\end{theorem}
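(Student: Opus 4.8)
The plan is to split the computation into two stages that mirror the structure already established: first fix the final Copeland score that $v_w$ will achieve, then hold every other team below that score as cheaply as possible. By the preceding lemma, the greedy procedure that reverses edges $(v_i,v_w)\in M$ in decreasing order of $outDegree(v_i)$ fixes the out-degree $d$ that $v_w$ should attain and does so with the minimum number of manipulations incident to $v_w$; because each such reversal simultaneously raises $v_w$ and lowers the currently largest competitor, it dominates any other use of those edges. So the first step is to run this greedy procedure, record the number $p$ of reversals it performs, set $d=outDegree(v_w)$, and for each $v_k\neq v_w$ let $a_k\in\{0,1\}$ record whether $v_k$ still beats $v_w$ afterwards.

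The second step is to decide the outcomes of the games among the remaining $n-1$ teams so that no team exceeds $d$, using as few reversals of edges in $M$ as possible. I would cast this as a minimum-cost feasible flow on the Kern--Paulusma winner-determination network \cite{kern04}, augmented with costs. The network has a source $s$, a node $g_{ij}$ for each game between two teams $v_i,v_j\neq v_w$, a node for each team $v_k\neq v_w$, and a sink $t$. I place a unit-capacity arc $s\to g_{ij}$, arcs $g_{ij}\to v_i$ and $g_{ij}\to v_j$ that route the single win of that game to its eventual winner, and an arc $v_k\to t$ of capacity $d-a_k$ bounding the total wins $v_k$ may accumulate among the other teams. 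A fixed game forces its win onto the actual winner (only the corresponding game-to-team arc is present); for a manipulable game currently won by $v_i$ I give the arc $g_{ij}\to v_i$ cost $0$ and the arc $g_{ij}\to v_j$ cost $1$, since reversing the game is the single allowed manipulation and costs one thrown game. A flow saturating all source arcs corresponds exactly to an assignment of winners in which every competitor finishes with out-degree at most $d$, and its cost equals the number of reversed manipulable edges; hence a minimum-cost feasible flow gives the minimum number of additional manipulations, and the answer is $p$ plus this cost. If no source-saturating flow exists, no manipulation can make $v_w$ a co-winner, so infeasibility of the network also settles the no-solution case.

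For correctness I would verify the standard flow/assignment correspondence (saturating the source is equivalent to scheduling every inter-competitor game within the capacity bounds) and then argue global optimality of the two-stage split by an exchange argument: given any optimal manipulation set, its reversals partition into those incident to $v_w$ and those among the other teams; the lemma lets us replace the incident part by the greedy reversals without increasing the count and without pushing any competitor above $d$, after which the remaining reversals form a feasible flow of cost no smaller than the computed minimum. Polynomiality is then immediate, since the network has $O(n^2)$ nodes and arcs, all capacities and costs are integers bounded by $n$ so an integral minimum-cost flow exists and is computable in polynomial time, and the greedy stage is clearly $O(n^2)$.

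The main obstacle is this global-optimality argument for the decomposition. It is tempting but not obviously sound to commit to the greedy choice of which competitors lose to $v_w$ before the flow stage is solved, because a reversal incident to $v_w$ both lowers the common threshold requirement and suppresses one specific team; one must rule out a solution that deliberately under-boosts $v_w$ (keeping $d$ smaller) while spending the saved throws more cleverly among the other teams, or that suppresses a different competitor in order to make the flow feasible with fewer inter-competitor reversals. The exchange argument must therefore show that greedily lowering the maximum-out-degree opponent is never dominated in the combined objective --- that it is always at least as useful for downstream flow feasibility as lowering any other team --- which is exactly where the ``double efficiency'' observation behind the lemma does the real work and has to be pushed through with care.
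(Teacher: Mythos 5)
Your proposal matches the paper's own proof essentially step for step: run the greedy procedure of the preceding lemma to fix the final score $c$ of $v_w$, then solve a single minimum-cost feasible flow on the Kern--Paulusma winner-determination network with weight $1$ on arcs representing manipulations and weight $0$ elsewhere, and report the greedy count plus the flow cost. The decomposition worry you flag at the end is legitimate but is not addressed by the paper either --- its proof leans entirely on the greedy lemma and asserts the two-stage optimality without the exchange argument you sketch --- so on that point your write-up is, if anything, the more careful one.
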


\begin{proof}
We define $c$ to be the out degree of the distinguished node, $v_w$, calculated using the greedy algorithm.  This corresponds to the number of wins earned by $v_w$.  If the stopping condition has not been reached, we must use $c$ to determine how many more manipulations are necessary.  We construct a winner determination flow graph as described by Kern and Paulusma \cite{kern04} and Gusfield and Martel\cite{gusfield02}(See Fig.~\ref{tournament-flowgraph}, for example).  We add a weight of 1 to each edge $(v_i,v_j)$ where $(v_i,v_j)\notin V$ and therefore represents a manipulation.  All other edges have the weight 0.  The feasible flow which uses the fewest of the non-zero edges is the minimal number of tournament manipulations to achieve a constructive manipulation.  Since the value of $c$ can be determined in a linear number of steps, we only need to do a single min cost flow computation, which is polynomial, to determine the remainder of the minimum number of manipulations necessary to make $v_w$ the team with the highest Copeland score.\qed
\end{proof}

\begin{example}
An example tournament can be seen in Fig.~\ref{tournament-flowgraph}a.  There are 5 teams in this tournament: $v_0$ to $v_4$.  Suppose teams $v_0$ and $v_3$ form a coalition to manipulate the tournament so that $v_0$ wins.  We want to determine the minimum number of manipulations needed to ensure that $v_0$ is the winner.  This requires switching any of the arcs where team $v_3$ wins.  We know that the value of $c=2$ since none of $v_0$'s edges are manipulable in $v_{0}$'s favour.  We construct the graph seen in Fig.~\ref{tournament-flowgraph}b to determine for $c=2$ if there is a feasible solution.  The solution returned has a minimum cost which is equal to the minimum number of manipulations needed to get a feasible flow with the value $c$ plus any used in the greedy algorithm.
\end{example}

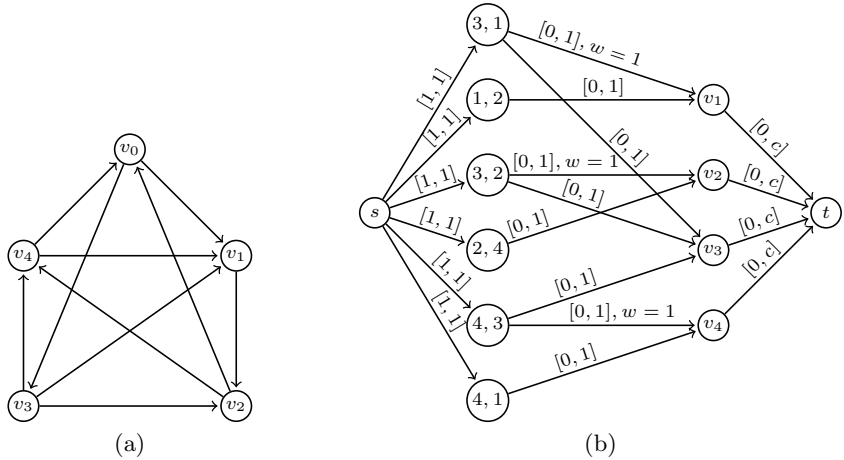
\begin{figure}[t]
\begin{tabular}{p{0.05\textwidth}c p{0.1\textwidth}c}
&
\begin{tikzpicture}[->,shorten >=1pt,%
auto,node distance=2cm,semithick,
inner sep=1pt,bend angle=15]
\tikzstyle{every node}=[font=\scriptsize]
\node[state,minimum size=4mm] (V0) [] {$v_{0}$};
\node[state,minimum size=4mm] (V1) [below right of=V0] {$v_{1}$};
\node[state,minimum size=4mm] (V2) [below of=V1] {$v_{2}$};
\node[state,minimum size=4mm] (V4) [below left of=V0] {$v_{4}$};
\node[state,minimum size=4mm] (V3) [below of=V4] {$v_{3}$};

\tikzstyle{every node}=[font=\scriptsize]
\path(V0) edge node {} (V1) 
	(V0) edge node {} (V3)
	(V1) edge node {} (V2)
	(V2) edge node {} (V0)
	(V2) edge node {} (V4)
	(V3) edge node {} (V1)
	(V3) edge node {} (V2)
	(V3) edge node {} (V4)
	(V4) edge node {} (V0)
	(V4) edge node {} (V1);
\end{tikzpicture}
& &
\begin{tikzpicture}[->,shorten >=1pt,%
auto,node distance=1cm,semithick,
inner sep=1pt,bend angle=30]
\tikzstyle{every node}=[font=\scriptsize]
\node[state,minimum size=4mm] (B1) {$3,1$};
\node[state,minimum size=4mm] (B2) [below of=B1] {$1,2$};
\node[state,minimum size=4mm] (B3) [below of=B2] {$3,2$};
\node[state,minimum size=4mm] (B4) [below of=B3] {$2,4$};
\node[state,minimum size=4mm] (B5) [below of=B4] {$4,3$};
\node[state,minimum size=4mm] (B6) [below of=B5] {$4,1$};
\node[state,minimum size=4mm] (A)  [below left of=B2, node distance=2.125cm]{$s$};
\node[state,minimum size=4mm] (C1) [right of=B2,node distance=3cm] {$v_1$};
\node[state,minimum size=4mm] (C2) [right of=B3,node distance=3cm] {$v_2$};
\node[state,minimum size=4mm] (C3) [right of=B4,node distance=3cm] {$v_3$};
\node[state,minimum size=4mm] (C4) [right of=B5,node distance=3cm] {$v_4$};
\node[state,minimum size=4mm] (D) [right of=A,node distance=6cm] {$t$};
\tikzstyle{every node}=[font=\scriptsize]
\path (A) edge node [sloped,pos=0.8] {$[1,1]$} (B1) 
	(A) edge node [sloped,pos=0.95] {$[1,1]$} (B2)
	(A) edge node [sloped,pos=0.95] {$[1,1]$} (B3)
	(A) edge node [sloped,pos=0.3] {$[1,1]$} (B4)
	(A) edge node [sloped,pos=0.45] {$[1,1]$} (B5)
	(A) edge node [sloped,pos=0.45] {$[1,1]$} (B6)
	(B1) edge node [sloped,midway] {$[0,1]$} (C3)
	(B1) edge node [sloped,very near start] {$[0,1],w=1$} (C1)
	(B2) edge node [sloped,midway] {$[0,1]$} (C1)
	(B3) edge node [sloped,pos=0.3] {$[0,1],w=1$} (C2)
	(B3) edge node [sloped,near start] {$[0,1]$} (C3)
	(B4) edge node [sloped,near start] {$[0,1]$} (C2)
	(B5) edge node [sloped,pos=0.6] {$[0,1],w=1$} (C4)
	(B5) edge node [sloped,midway] {$[0,1]$} (C3)
	(B6) edge node [sloped,midway] {$[0,1]$} (C4)
	(C1) edge node [sloped,pos=0.2] {$[0,c]$} (D)
	(C2) edge node [sloped,pos=0.1] {$[0,c]$} (D)
	(C3) edge node [sloped,pos=0.7] {$[0,c]$} (D)
	(C4) edge node [sloped,pos=0.7] {$[0,c]$} (D);
\end{tikzpicture}
\\
&(a) & & (b)
\end{tabular}
\caption{(a) The tournament graph for five teams.  The distinguished node in the example is $v_0$ which has formed a coalition with $v_3$.  The manipulable edges are $(v_3,v_1)$, $(v_3,v_2)$, $(v_3,v_4)$, $(v_0,v_1)$ and $(v_0,v_3)$.  Edges $(v_1,v_2)$ $(v_2,v_4)$ and $(v_4,v_1)$ cannot be manipulated by the coalition. (b) The min cost flow graph used to calculate the minimum number of manipulations for a given value constructed from the tournament in Fig.~\ref{tournament-flowgraph}a.  The distinguished team is $v_0$, $c=2$ and all weights not shown are 0.}
\label{tournament-flowgraph}
\end{figure}

\section{Reseeding}




If we add multiple seeding rounds then computing a manipulation appears 
difficult.  Recall that ranked reseeding matches the best remaining teams against the worst remaining teams in each round.  The CSL algorithm cannot therefore be applied and a general solution is not known.  However, if the size of the coalition is a constant $c$, then we can determine a manipulation in polynomial time.

\begin{theorem}
\label{ranked-reseeding-cup}
For a ranked reseeding cup competition, 
if the manipulating coalition is of bounded size $c$, 
then determining a set of manipulations that 
makes a team win takes polynomial time.
\end{theorem}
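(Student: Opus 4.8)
The plan is to exploit the fact that, although ranked reseeding destroys the static bracket structure on which the CSL algorithm relies, the \emph{only} source of nondeterminism in the entire competition is the behaviour of the coalition. Every game between two non-coalition teams is decided by the fixed tournament graph $T$, and since an edge $(v_i,v_j)$ may be reversed only when $v_i$ is itself in the coalition, a non-coalition team can neither lose on purpose nor play above its strength. Consequently the whole competition, including the reseeding performed between rounds, is a deterministic function of the choices made by coalition members at the games they actually play, so it suffices to search over these choices.

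First I would bound the number of such choices in a single round. In any round at most $c$ coalition members survive, each plays exactly one game, and hence at most $c$ of that round's games involve a coalition member. Each such game admits at most two outcomes: a coalition member facing a team it beats in $T$ may either take the win or throw it; two coalition members facing each other may advance either one; and a coalition member facing a team that beats it in $T$ has a single forced outcome. Thus each round branches into at most $2^{c}$ distinct joint outcomes. I would also record the easy invariant that the number of surviving coalition members is non-increasing (an eliminated member never returns), so the $2^{c}$ bound holds at \emph{every} round.

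Next I would turn this into a global search. Since a single-elimination cup on $m$ teams has $\lceil \log_{2} m\rceil$ rounds, a depth-first simulation that branches over the $\le 2^{c}$ coalition outcomes at each round visits at most $(2^{c})^{\lceil\log_{2} m\rceil}=O(m^{c})$ leaves. The recursion never needs to know the bracket in advance: at each node it reseeds the currently surviving teams by rank, reads off the forced outcomes of the coalition-free games, branches over the at most $c$ coalition-involving games, and recurses. Each leaf is a complete deterministic run of the tournament, simulated in $\mathrm{poly}(m)$ time (re-sorting for reseeding costs $O(m\log m)$ per round). I would accept and output the throws made along any leaf in which $v_{w}$ is champion, and report failure if no leaf produces $v_{w}$; the total running time is $O(m^{c})\cdot\mathrm{poly}(m)$, polynomial for fixed $c$.

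The main obstacle is the bound of the second paragraph. Because reseeding can reshuffle the survivors arbitrarily between rounds, the set of games a given coalition member will eventually contest is not fixed in advance, so one cannot simply enumerate coalition strategies as independent bit-strings of known length. The point is that the branching bound is made \emph{per round} and is insensitive to how reseeding permutes the teams: at most $c$ members survive into any round regardless of what happened earlier, so the branching factor stays at $2^{c}$ no matter the reshuffling, and the product over the $\lceil\log_{2} m\rceil$ rounds telescopes to $2^{c\log_{2} m}=m^{c}$. Finally I would note that byes alter the round count by only a constant factor and so do not affect the argument.
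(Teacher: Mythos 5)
Your proposal is correct and follows essentially the same route as the paper's proof: you bound the per-round nondeterminism by $2^{c}$ (since at most $c$ games per round involve a coalition member), multiply over the $\log m$ rounds to get $(2^{c})^{\log m}=m^{c}$ scenarios, and check each deterministic run in polynomial time. Your write-up is in fact somewhat more careful than the paper's, since you explicitly organize the enumeration as a round-by-round search tree (handling the fact that the reseeding, and hence the set of future matchups, depends on earlier outcomes), whereas the paper leaves this implicit.
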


\begin{proof}
The key observation is that with a constant sized coalition there are only a polynomial number of ways to manipulate the games by rearranging the tournament graph.  It suffices to check the winner of each of the polynomial number of fixed tournament graphs.  For each fixed tournament graph, the winner can be determined in linear time as there are only $O(m)$ matches to check.

We show that there are only a polynomial number of different arrangements of manipulations. First note that at most $c$ of the $\frac{m}{2}$ matches in the first round have more than one team as a possible winner.  This means that there is at most $2^{c}$ possibilities to examine after each round.  As there are $\log(m)$ rounds,
we consider at most $(2^{c})^{\log{m}}$ (=$m^c$) possibilities. 
Hence there are at most $O(m^c)$ arrangements of manipulations for an unfixed cup with ranked reseeding and a constant sized coalition. It is sufficient to check each arrangement, which can be done in linear time.
This gives a polynomial algorithm for bounded $c$.\qed
\end{proof}

With random reseeding the problem can be separated into two issues: determining whether manipulation is possible to make a team a winner under every possible seeding and determining if there exists any seeding such that the coalition can manipulate the games to make a given team the winner.  It is unknown whether either of these problems have polynomial algorithms.  Vu et al.~\cite{vu09} and Hazon et al.~\cite{hazon08} tackle some probabilistic variants of possible winners without manipulation of games.  However, the complexity of determining possible winners with a win-loss tournament graph in balanced cup trees remains open \cite{lang07,hazon08,pini08}. 

\section{Double Elimination Competitions}

In a double elimination competitions, a manipulation of the tournament does not automatically bounce the manipulator out of the tournament as in the single elimination case.  However, it does guarantee that the manipulator will be bounced to the secondary bracket from the primary bracket on the first manipulation and out of the tournament on the second manipulation.  As in the case of ranked reseeding, a general solution is not known but there is a polynomial algorithm for double elimination tournaments if the coalition is of constant size.

\begin{theorem}
For double elimination tournaments, if the coalition is of a constant size $c$, determining whether there is a constructive manipulation takes polynomial time.
\end{theorem}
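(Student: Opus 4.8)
The plan is to mirror the strategy of Theorem~\ref{ranked-reseeding-cup}: with a coalition of constant size $c$ there are only polynomially many genuinely distinct ways the coalition can steer the tournament, and the winner of each can be checked in linear time. First I would fix the combinatorial structure of the double elimination bracket and record two size facts that make the counting work. The total number of games is $O(m)$ --- roughly $2m-2$, since every team other than the eventual champion is eliminated only after a second loss --- so simulating the outcome of any fully determined bracket takes $O(m)$ time. The total number of \emph{rounds} is $O(\log m)$: the primary (winners) bracket has $\log m$ rounds and the secondary (losers) bracket contributes at most about $2\log m$ more, with the grand final on top.

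Next I would bound the branching introduced by the coalition round by round. The key observation is that within any single round each team plays at most one game, since a team occupies exactly one of the two brackets at any moment; hence at most $c$ of that round's games involve a coalition member and are therefore manipulable. Each such game has only two possible recorded outcomes --- the coalition member plays honestly or throws --- while every game not touching the coalition is fixed by the tournament graph $T$. Thus each round multiplies the number of reachable configurations by at most $2^c$. A game between two coalition members still yields only two win/loss outcomes, and a game a coalition member is scheduled to lose yields just one, so $2^c$ is a genuine upper bound.

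Composing the branching over all rounds, the number of distinct bracket outcomes the coalition can force is at most $(2^c)^{O(\log m)} = m^{O(c)}$, which is polynomial for fixed $c$. The algorithm enumerates these configurations --- carrying the set of reachable partial brackets forward one round at a time, so that a team's drop from the primary to the secondary bracket after its first manipulated loss is propagated correctly --- and for each completed bracket checks in $O(m)$ time whether $v_w$ is the champion. A constructive manipulation exists iff $v_w$ wins in at least one configuration, giving total running time $m^{O(c)}$.

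The main obstacle is the propagation bookkeeping of the second and third paragraphs rather than the counting itself. Unlike a single elimination cup, a thrown game here does not remove the manipulator but reroutes it into the losers bracket, so an early manipulation changes the identities of later opponents in \emph{both} brackets. I would need to argue carefully that tracking configurations round by round, rather than game by game, still confines each coalition member to one manipulable game per round, so that the $2^c$-per-round bound --- and hence the overall $m^{O(c)}$ bound --- survives the inter-bracket coupling.
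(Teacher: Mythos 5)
Your proposal is correct and takes essentially the same approach as the paper: bound the coalition's branching by $2^{c}$ per round, compose over $O(\log m)$ rounds to get $m^{O(c)}$ manipulation scenarios, and check each completed bracket in linear time. You are in fact somewhat more careful than the paper's own proof, which counts only $\log m$ levels and does not address the roughly $2\log m$ rounds contributed by the losers bracket or the inter-bracket propagation bookkeeping you flag; neither refinement changes the polynomial bound.
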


\begin{proof}
This proof follows similar lines as the proof for ranked reseeding.  We will show that there is a polynomial number of manipulation scenarios which can be checked in linear time.
If there is a coalition of size $c$ then a team can manipulate the cup only once if they wish to win the tournament and twice if they desire another team to win.  At each step in the tree, a team must decided whether they wish to manipulate or not.  Before and after they have manipulated once, there remains $c$ teams which can manipulate.  Only after they have manipulated a second time are they removed from the competition.  This means there are at most $2^{c}$ manipulations at each of the $log{m}$ levels.  This gives us $O(2^{\log{m}c}) (=O(m^{c}))$ possibilities that can be checked in linear time, which gives a polynomial algorithm for determining if there is a constructive manipulation.\qed
\end{proof}

\section{Conclusions and Open Problems}

In sporting tournaments, teams can directly manipulate the
tournament graph. We showed that algorithms used to compute
manipulations of votes in elections can be modified to determine the manipulations needed of the tournament graph. 
We proved that such direct manipulation of the fixed cup and round robin competitions can be computed in polynomial time.  In a similar way, we can determine the minimal number of manipulations needed.  For ranked reseeding of cup
competitions, we showed that it is easy to calculate the number of manipulations if the size of the manipulating coalition is bounded by a constant.  We
also gave a polynomial time algorithm for double elimination tournaments for a constant sized coalition.
A number of open question remain.  The manipulation of various variations of the cup competition have unknown complexity including the ranked and random cup competitions.  For random cup competitions, the complexity of manipulation is also unknown if the size of the coalition is bounded.  Similarly, the complexity of manipulating double elimination competitions is still undetermined when the size of the coalition is unbounded.


\bibliographystyle{plain}

\end{document}